\newenvironment{manualtheorem}[1]{%
  \manualtheoreminner
}{\endmanualtheoreminner}
\newtheorem{theorem}{Theorem} 
\newtheorem{cor}{Corollary}[section]
\newtheorem{definition}{Definition}
\newcommand{\Xtest}{X_{\rm test}}
\newcommand{\Ytest}{Y_{\rm test}}
\newcommand{\Yhat}{\hat{Y}}
\newcommand{\R}{\mathbb{R}}
\renewcommand{\P}{\mathbb{P}}
\newcommand{\A}{\mathcal{A}}
\newcommand{\X}{\mathcal{X}}
\newcommand{\Y}{\mathcal{Y}}
\newcommand{\lhat}{\hat{\lambda}}
\newcommand{\T}{\mathcal{T}}
\newcommand{\Tlam}{\mathcal{T}_{\lambda}}
\newcommand{\ind}[1]{\mathbbm{1}\left\{#1\right\}}
\begin{document}
\title{ 
    Improving Trustworthiness of AI Disease Severity Rating in Medical Imaging with Ordinal Conformal Prediction Sets
}
\author{Charles Lu*, Anastasios N. Angelopoulos*, Stuart Pomerantz}
\date{}

\def\thefootnote{*}\footnotetext{Equal contribution}\def\thefootnote{\arabic{footnote}}

\maketitle              
\begin{abstract}
    The regulatory approval and broad clinical deployment of medical AI have been hampered  by the perception that deep learning models fail in unpredictable and possibly catastrophic ways. A lack of statistically rigorous uncertainty quantification is a significant factor undermining trust in AI results. Recent developments in distribution-free uncertainty quantification present practical solutions for these issues by providing reliability guarantees for black-box models on arbitrary data distributions as formally valid finite-sample prediction intervals. Our work applies these new uncertainty quantification methods --- specifically conformal prediction --- to a deep-learning model for grading the severity of spinal stenosis in lumbar spine MRI. We demonstrate a technique for forming ordinal prediction sets that are guaranteed to contain the correct stenosis severity within a user-defined probability (confidence interval). On a dataset of 409 MRI exams processed by the deep-learning model, the conformal method provides tight coverage with small  prediction set sizes. Furthermore, we explore the potential clinical applicability of flagging cases with high uncertainty predictions (large prediction sets) by quantifying an increase in the prevalence of significant imaging abnormalities (e.g. motion artifacts, metallic artifacts, and tumors) that could degrade confidence in predictive performance when compared to a random sample of cases.
\end{abstract}

\section{Introduction}
    \setlength{\textfloatsep}{0.2cm}
    \setlength{\floatsep}{0.2cm}
    \setlength{\intextsep}{0.2cm}
    Although many studies have demonstrated high overall accuracy in automating medical imaging diagnosis with deep-learning AI models, translation to actual clinical deployment has proved difficult. It is widely observed  that deep learning algorithms can fail in bizarre ways and with misplaced confidence~\cite{Minderer2021RevisitingTC,hendrycks2021nae}. A core problem is a lack of \emph{trust} --- a survey of radiologists found they that although they thought AI tools add value to their clinical practice, they would not trust AI for autonomous clinical use due to perceived and experienced unreliability~\cite{allen20212020}.
    
    Herein, we present methods for endowing arbitrary AI systems with formal mathematical guarantees that give clinicians explicit assurances about an algorithm's overall performance and most importantly for a given study. These guarantees are \emph{distribution-free}---they work for any (pre-trained) model, any (possibly unknown) data distribution, and in finite samples. Although such guarantees do not solve the issue of trust entirely, the precise and formal understanding of their model's predictive uncertainty enables a clinician to potentially work more assuredly in concert with AI assistance. 
    
    We demonstrate the utility of our methods using an AI system developed to assist radiologists in the grading of spinal stenosis in lumbar MRI.
    Degenerative spinal stenosis is the abnormal narrowing of the spinal canal that compresses the spinal cord or nerve roots, often resulting in debility from pain, limb weakness, and other neurological disorders. 
    It is a highly prevalent condition that affects working-age and elderly populations and constitutes a heavy societal burden not only in the form of costly medical care but from decreased workplace productivity, disability, and lowered quality of life.
    The formal interpretation of spinal stenosis imaging remains a challenging and time-consuming task even for experienced subspecialty radiologists due to the complexity of spinal anatomy, pathology, and the MR imaging modality. Using a highly accurate AI model to help assess the severity of spinal stenosis on MRI could lower interpretation time and improve the consistency of grading. ~\cite{stenosisvariability} Yet, given the practical challenges of medical imaging that can degrade model performance in any given exam, the adoption of such tools will be low if clinicians encounter poor quality predictions without a sense of when the model is more or less reliable. To bolster such trust, we apply conformal prediction to algorithmic disease severity classification sets in order to identify higher uncertainty predictions that might merit special attention by the radiologist. 
    Our main contributions are the following:
    \begin{enumerate}
        \item We develop new distribution-free uncertainty quantification methods for ordinal labels.
        \item To our knowledge, we are the first to apply distribution-free uncertainty quantification to the results of an AI model for automated stenosis grading of lumbar spinal MRI.
        \item We identify a correlation between high  prediction uncertainty in individual cases and the presence of potentially contributory imaging features as evaluated by a neuroradiologist such as tumors, orthopedic hardware artifacts, and motion artifacts. 
    \end{enumerate}
    
    \begin{figure}[t]
        \centering
        \includegraphics[width=\textwidth]{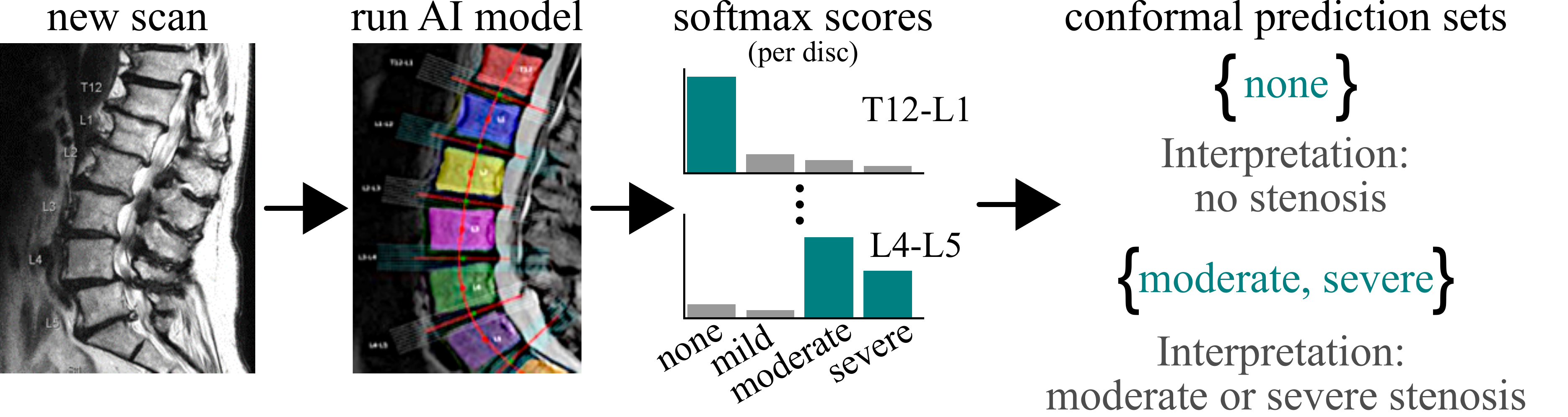}
        \caption{The output of our conformal procedure on a lumbar spine MRI.} \label{fig:teaser}
    \end{figure}

\section{Methods}
    \label{sec:methods}
    To formally describe the problem, let the input $\Xtest \in \X$, $\X=\R^{H \times W \times D}$ be an MR image and the ground truth label $\Ytest \in \Y$, $\Y = \{0,...,K-1\}$ be an ordinal value representing the severity of the disease (higher values indicating greater severity).
    We are given a pre-trained model, $\hat{f}$, that takes in images and outputs a probability distribution over severities; for example, $\hat{f}$ may be a 3D convolutional neural network with a softmax function.
    Assume we also have a calibration dataset, $\big\{(X_i,Y_i)\big\}_{i=1}^{n}$, of data points that the model has not seen before.
    This calibration data should be composed of pairs of MR images and their associated disease severity labels drawn i.i.d.
    Given a new MR image $\Xtest$, the task is to predict the (unobserved) severity, $\Ytest$.
    In the usual multi-class classification setting, the output is the label with the highest estimated probability, $\Yhat(x) = \underset{y \in \{1,...,K\}}{\arg \max}\hat{f}(x)_y$.
    However, $\Yhat(\Xtest)$ may be wrong, either because the learned model $\hat{f}$ does not learn the relationship between MR images and severities properly or because there is intrinsic randomness in this relationship that cannot be accounted for by any algorithm (i.e. aleatoric uncertainty).
    
    Our goal is to rigorously quantify this uncertainty by outputting a set of probable disease severities that is guaranteed to contain the ground truth severity on average.
    These \emph{prediction sets} will provide \emph{distribution-free} probabilistic guarantees, \emph{i.e.} ones that do not depend on the model or distribution of the data.
    
    \subsubsection*{Ordinal Adaptive Prediction Sets (Ordinal APS) \newline}
        Our approach uses conformal prediction with a novel score function designed for ordinal labels.
        In particular, each prediction set will always be a contiguous set of severities, and for any user-specified error rate $\alpha$, prediction sets will contain the true label with probability $1-\alpha$.
        The reader can refer to~\cite{romano2020classification} and~\cite{sesia2021conformal} for similar algorithms and exposition.
            
        \subsubsection*{An Oracle Method \newline}
        Imagine we had oracle access to the true probability distribution over severities $P\big(\Ytest \mid \Xtest\big)$ with associated density function $f(x)_y$.
        A reasonable goal might then be to pick the set with the smallest size while still achieving \emph{conditional coverage}.
        
        \begin{definition}[conditional coverage]
            A predicted set of severities $\T(\Xtest)$ has conditional coverage if it contains the true severity with $1-\alpha$ probability no matter what MR image is input, i.e.,
            \begin{equation}
                \small
                \label{eq:conditional-coverage}
                \P\left( \Ytest \in \T(x) \mid \Xtest = x \right) \geq 1 - \alpha \text{, for all } x \in \X.
            \end{equation}
        \end{definition}
        
        The clinical benefit of conditional coverage is that it essentially achieves a per-patient guarantee as opposed to one that is averaged across patients.
        Under conditional coverage, the uncertainty sets will work equally well for all possible subgroups such as subpopulations from different patient demographics.
        
        Ignoring tie-breaking, we can succinctly describe the oracle prediction set as follows:
        \begin{equation}
            \small
            \label{eq:t-opt}
            \begin{aligned}
                \T^{(\rm optimal)}(x) &=  \left[l^*(x), \; u^*(x)\right], \text{ where } \\ \big(l^*(x), \; u^*(x)\big) &= \underset{\substack{(l,u) \in \Y^2 \\ l \leq u}}{\arg \min} \Bigg\{ u-l : \sum\limits_{j=l}^{u}f(x)_j \geq 1-\alpha \Bigg\}.
            \end{aligned}
        \end{equation}
        This set, $\T^{(\rm optimal)}$, is the smallest that satisfies~\eqref{eq:conditional-coverage}.
        Ideally, we would compute $\T^{(\rm optimal)}$ exactly, but we do not have access to $f$, only its estimator $\hat{f}$, which may be arbitrarily bad.

        \subsubsection*{Ordinal Adaptive Prediction Sets \newline}
        Naturally, the next step is to plug in our estimate of the probabilities, $\hat{f}$,  to~\eqref{eq:t-opt}.
        However, because $\hat{f}$ may be wrong, we must calibrate the resulting set with conformal prediction, yielding a \textit{marginal coverage} guarantee.
        Our procedure is illustrated graphically in the right plot of Figure~\ref{fig:ordinal-aps}; it corresponds to greedily growing the set outwards from the maximally likely predicted severity (i.e. ``mild'' stenosis in this example).
        \begin{definition}[marginal coverage]
            A predicted set of severities $\T$ has marginal coverage if it contains the true severity on average over new MRIs, i.e., 
            \begin{equation}
                \small
                \label{eq:marginal-coverage}
                \P\left( \Ytest \in \T(\Xtest) \right) \geq 1-\alpha.
            \end{equation}
        \end{definition}
        
        \begin{figure}[t]
            \centering
            \includegraphics[width=0.7\textwidth]{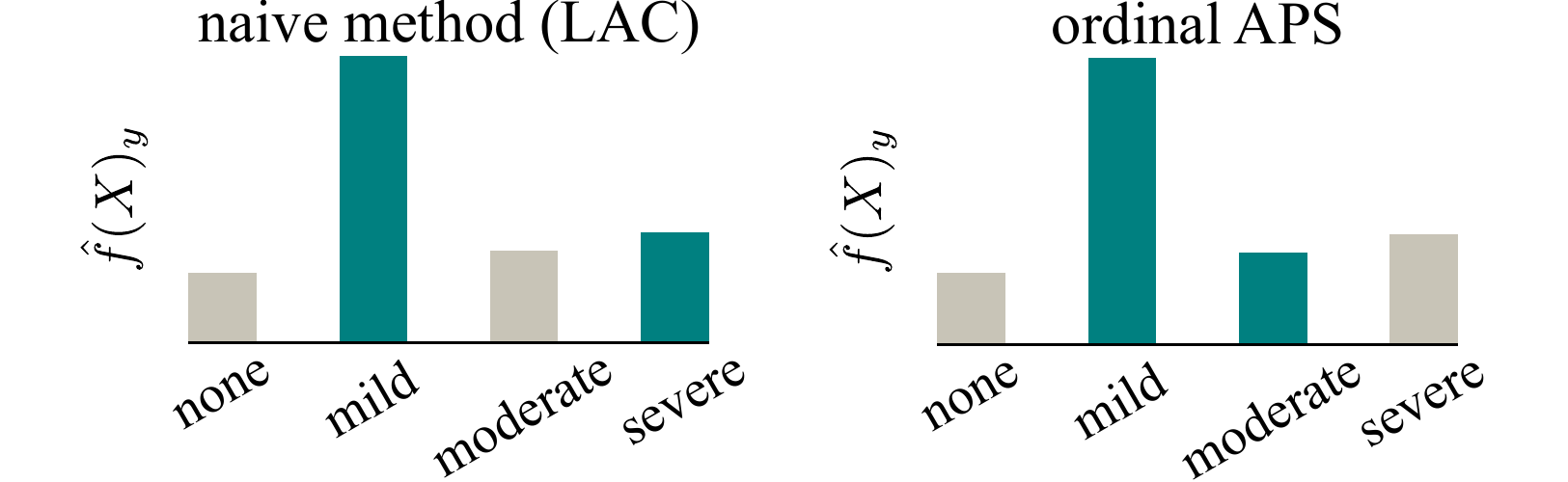}
            \caption{Comparison of an example prediction set chosen by naive least-ambiguous set-valued (LAC) classifiers~\cite{Sadinle2016LeastAS} and Ordinal APS methods;
            notice that LAC does not respect ordinality, which may result in an incongruous prediction set.} \label{fig:ordinal-aps}
        \end{figure}
        
        Marginal coverage is weaker than conditional coverage since it holds only on average over the entire population, so coverage may be worse or better for some subgroups.
        While conditional coverage is, in general, impossible~\cite{barber2019limits}, we can hope to approximate conditional coverage by defining a set similar to $\T^{(\rm optimal)}$ that uses $\hat{f}$.
        To that end, define a sequence of sets indexed by a threshold $\lambda$,
        \begin{equation}
            \label{eq:t-opt-hat}
            \small
            \begin{aligned}
                \Tlam(x) &= \; [\hat{l}^*(x), \; \hat{u}^*(x)], \text{ where } \\ \big(\hat{l}^*(x), \; \hat{u}^*(x)\big) &= \underset{\substack{(l, \; u) \; \in \; \Y^2 \\ l \; \leq \; u}}{\arg \min} \Bigg\{ u-l : \sum\limits_{j=l}^{u}\hat{f}(j|x) \geq \lambda \Bigg\}.
            \end{aligned}
        \end{equation}
        
        Notice that as $\lambda$ grows, the sets grow, meaning they are \emph{nested} in $\lambda$:
        \begin{equation}
            \lambda_1 \leq \lambda_2 \implies \forall x, \; \; \T_{\lambda_1}(x) \subseteq \T_{\lambda_2}(x).
        \end{equation}
        
        The key is to pick a value of $\lambda$ such that the resulting set satisfies~\eqref{eq:marginal-coverage}.
        The following algorithm takes as input $\mathcal{T}_{\lambda}$ and outputs our choice of $\lambda$:
        \begin{equation}
            \label{eq:alg-calibrate-nested}
            \small
            \A(\mathcal{T}_{\lambda} ; \; \alpha) = \inf\Bigg\{ \lambda : \sum\limits_{i=1}^n \ind{Y_i \in \mathcal{T}_{\lambda}(X_i)}\; \geq \; \lceil(n+1)(1-\alpha)\rceil\Bigg\}.
        \end{equation}
        
        The key to our guarantee is the quantity $\lceil (n+1)(1-\alpha) \rceil$, which is slightly larger than the naive choice $n(1-\alpha)$ and helps us correct for the model's deficiencies; see~\cite{angelopoulos2021gentle} for details on this statistical argument.
        Using this algorithm, approximated in Algorithm~\ref{alg:approx-ordinal-aps}, results in a marginal coverage guarantee.
        \begin{theorem}[Conformal coverage guarantee]
            \label{thm:conformal-coverage}
            \small
            Let $(X_1,Y_1)$, $(X_2,Y_2)$, ..., $(X_n,Y_n)$ and $(\Xtest,\Ytest)$ be an i.i.d. sequence of MRIs and paired severities and let $\lhat=\A(\mathcal{T}_\lambda, \; \alpha)$. 
            Then $\mathcal{T}_{\lhat}$ satisfies~\eqref{eq:marginal-coverage}, i.e., it contains the true label with probability $1-\alpha$.
        \end{theorem}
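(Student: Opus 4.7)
The plan is to reduce the theorem to the standard quantile-lemma for exchangeable scalar random variables, using the nestedness property of $\Tlam$ that is stated just before the algorithm. Concretely, for every data point $(X_i, Y_i)$ I will define the scalar score
\begin{equation}
    s_i \;=\; \inf\bigl\{\lambda : Y_i \in \Tlam(X_i)\bigr\},
\end{equation}
and analogously $s_{\rm test}$ for $(\Xtest,\Ytest)$. Since $\lambda_1 \leq \lambda_2$ implies $\T_{\lambda_1}(x) \subseteq \T_{\lambda_2}(x)$, the event $\{Y_i \in \Tlam(X_i)\}$ is equivalent to $\{s_i \leq \lambda\}$ (modulo the boundary, which I will handle by a standard right-continuity argument on the inf). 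This turns the set-valued problem into a one-dimensional problem about the scores.

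Next I will rewrite the algorithm's output $\lhat = \A(\Tlam; \alpha)$ purely in terms of the scores: by the equivalence above,
\begin{equation}
    \lhat \;=\; \inf\Bigl\{ \lambda : \sum_{i=1}^n \ind{s_i \leq \lambda} \geq \lceil(n+1)(1-\alpha)\rceil \Bigr\},
\end{equation}
which is exactly the empirical $\lceil(n+1)(1-\alpha)\rceil/n$ quantile of $s_1,\ldots,s_n$. So coverage of $\Tlhat(\Xtest)$ reduces to the statement $\P(s_{\rm test} \leq \lhat) \geq 1-\alpha$.

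The core step is then the exchangeability argument. Because $(X_1,Y_1),\ldots,(X_n,Y_n),(\Xtest,\Ytest)$ are i.i.d., the scores $s_1,\ldots,s_n,s_{\rm test}$ are exchangeable. The event $s_{\rm test} \leq \lhat$ is the event that $s_{\rm test}$ lies at or below the $\lceil(n+1)(1-\alpha)\rceil$-th order statistic of the $n$ calibration scores, which in turn is the event that the rank of $s_{\rm test}$ in the augmented sample of size $n+1$ is at most $\lceil(n+1)(1-\alpha)\rceil$. By exchangeability this rank is (almost) uniform on $\{1,\ldots,n+1\}$, so the probability is at least $\lceil(n+1)(1-\alpha)\rceil/(n+1) \geq 1-\alpha$, giving the required marginal coverage bound.

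The main obstacle I anticipate is purely technical: handling ties in the scores and the boundary behaviour of the $\inf$ in the definitions of $s_i$ and $\lhat$. If the distribution of $s_i$ has atoms, then the rank among $n+1$ exchangeable variables is only stochastically dominated by a uniform distribution, not exactly uniform; one standard fix is to break ties uniformly at random, another is to keep the $\inf$ definition and verify that the inequality $s_{\rm test} \leq \lhat$ still implies coverage. Either route yields the same conclusion, so I would use whichever gives the cleanest bookkeeping and then cite~\cite{angelopoulos2021gentle} for the formal details of the $(n+1)$ correction.
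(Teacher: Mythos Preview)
Your proposal is correct and is essentially the argument the paper defers to. The paper's own ``proof'' in Appendix~\ref{app:proofs} does not actually argue anything: it states a formal version for an arbitrary nested family $\mathcal{C}_\lambda$, cites \cite{vovk1999machine,angelopoulos2021gentle,gupta2021nested} for the proof, and then observes that the informal Theorem~\ref{thm:conformal-coverage} follows by specializing to $\Tlam$ (together with a one-line check that $\Tlam$ is nested). Your reduction via the score $s_i=\inf\{\lambda:Y_i\in\Tlam(X_i)\}$ and the subsequent rank/exchangeability argument is exactly the content of the nested-set viewpoint in \cite{gupta2021nested} combined with the quantile lemma in \cite{angelopoulos2021gentle}, so you are simply writing out what the paper cites rather than taking a different route.
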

        
        This theorem holds for any data distribution or machine learning model, any number of calibration data points $n$, and any possible sequence of nested sets that includes $\Y$ (see the formal version and proof in Appendix~\ref{app:proofs}).

        \subsubsection*{Implementing Ordinal Adaptive Prediction Sets}
        In practice, Ordinal APS has two undesirable properties: computing $\Tlam(x)$ requires a combinatorial search of the set of possible severities, and $\Tlam(x)$ may not include the point prediction $\hat{Y}$.
        In practice, we therefore approximate Ordinal APS greedily as described below in Algorithm~\ref{alg:approx-ordinal-aps}.
        
        \begin{algorithm}[ht]
\caption{Pseudocode for approximately computing $\Tlam(x)$}
\label{alg:approx-ordinal-aps}

\small

\textbf{Input:} Parameter $\lambda$; underlying predictor $\hat{f}$; input $x \in \X$.

\textbf{Output:} $\Tlam(x)$.

\begin{algorithmic}[1]

\State $\Tlam(x) \gets \arg \max\hat{f}(x)$
\State $q \gets 0$

\While{$q \leq \lambda$}
    \State $S \gets \{\min \Tlam(x) - 1, \; \max \Tlam(x) + 1\}$ \vspace{0.03cm}
    \State $y \gets \underset{y' \in S}{\arg \max} \, \hat{f}(x)\ind{y' \in \{1,...,K\} }$
    \State $q \gets q + \hat{f}(x)_y$
    \State $\Tlam(x) \gets \Tlam(x) \; \cup \; \{y\}$
\EndWhile
\end{algorithmic}

\end{algorithm}

        The algorithm always contains $\hat{Y}$ and requires only $\mathcal{O}(n)$ computations; furthermore, it usually results in exactly the same sets as the exact method in our experiments, which have a small value of $K$.
        
        Note that the approximate choice of $\Tlam(x)$ described in Algorithm~\ref{alg:approx-ordinal-aps} is still nested, and thus we can still guarantee coverage (see Corollary~\ref{cor:approx} for a formal statement and proof).
        
\section{Experiments}

    We compare Ordinal Adaptive Prediction Sets to two other conformal methods: \textit{Least Ambiguous set-valued Classifier} (LAC)~\cite{Sadinle2016LeastAS} and \textit{Ordinal Cumulative Distribution Function} (CDF).
    LAC uses the softmax score of the true class as the conformal score function.
    LAC theoretically gives the smallest average set size but sacrifices conditional coverage to achieve this. 
    Additionally, LAC does not respect ordinality and thus may output non-contiguous prediction sets, which are inappropriate in an ordinal setting such as disease severity rating.
    The Ordinal CDF method starts at the highest prediction score and then inflates the intervals by $\lambda$ in quantile-space; in that sense, it is similar to a discrete version of conformalized quantile regression~\cite{koenker1978regression,romano2019conformalized}.
    We only use the non-randomized versions of these methods.
    
    We evaluate these three conformal methods on a deep learning system previously developed for automated lumbar spine stenosis grading in MRI, DeepSPINE~\cite{DBLP:journals/corr/abs-1807-10215}. 
    The deep learning system consists of two convolutional neural networks -- one to segment out and label each vertebral body and disc-interspace and the other to perform multi-class ordinal stenosis classification for three different anatomical sites (the central canal and right and left neuroforamina) at each intervertebral disc level for a total of up to 18 gradings per patient.
    
    \begin{figure}[t]
         \centering
         \includegraphics[width=\textwidth]{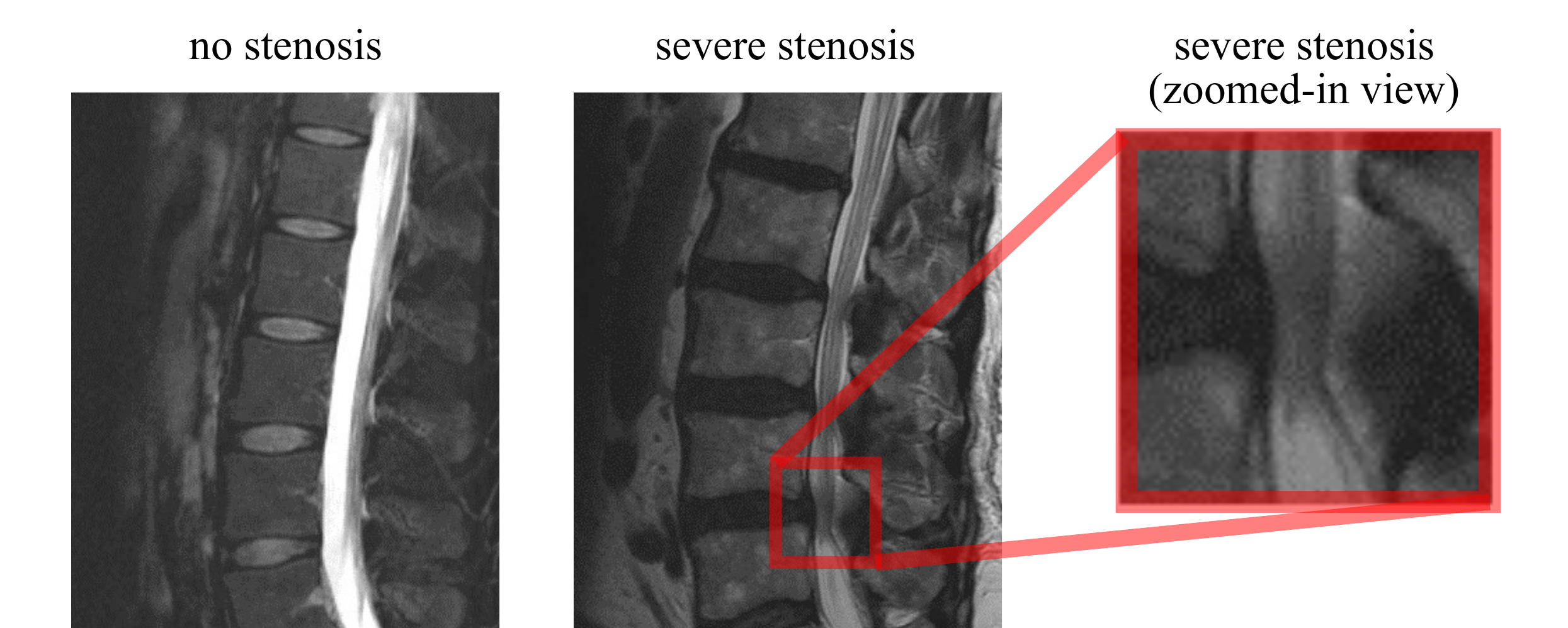}
         \caption{Example of a case without lumbar spinal stenosis and with severe lumbar spinal stenosis.} \label{fig:stenosis}
    \end{figure}
    
    For each MRI exam, the associated radiology report was automatically parsed for keywords indicative of the presence and severity of stenosis for each of the 6 vertebral disc levels (T12-L1 through L5-S1) to extract ground truth labels for a total of 6,093 gradings.
    Each grading was assigned to a value on a four-point ordinal scale of stenosis severity: 0 (\textit{no stenosis}), 1 (\textit{mild stenosis}), 2 (\textit{moderate stenosis}), and 3 (\textit{severe stenosis}). 
    Examples of patients with and without stenosis are shown in Figure~\ref{fig:stenosis}.
    
    For our experiments, we treat the stenosis grading model as a static model and only use it for inference to make predictions.
    We then process these predicted scores with the split-conformal techniques described in Section~\ref{sec:methods}.
    This scenario would most closely reflect the clinical reality of incorporating regulated, third-party AI software medical devices, which would likely not permit users access to the AI model beyond the ability to make predictions. 
    
    Our code and analysis used in the quantitative experiments are made available here: \texttt{https://github.com/clu5/lumbar-conformal}.
        
    \subsection{Quantitative Experiments}
        We use the predicted softmax scores from a held-out set of MRI exams from 409 patients, comprising 6,093 disc level stenosis severity predictions to calibrate and evaluate each conformal methods.
        We randomly include 5\% of patients in the calibration set and reserve the remainder for evaluating coverage and set size.
        We evaluate performance at several different $\alpha$ thresholds, $\alpha \in \{0.2, 0.15, 0.1, 0.05, 0.01\}$, and average results over 100 random trials.    
        
        \begin{figure}
            \centering
            \includegraphics[width=\textwidth]{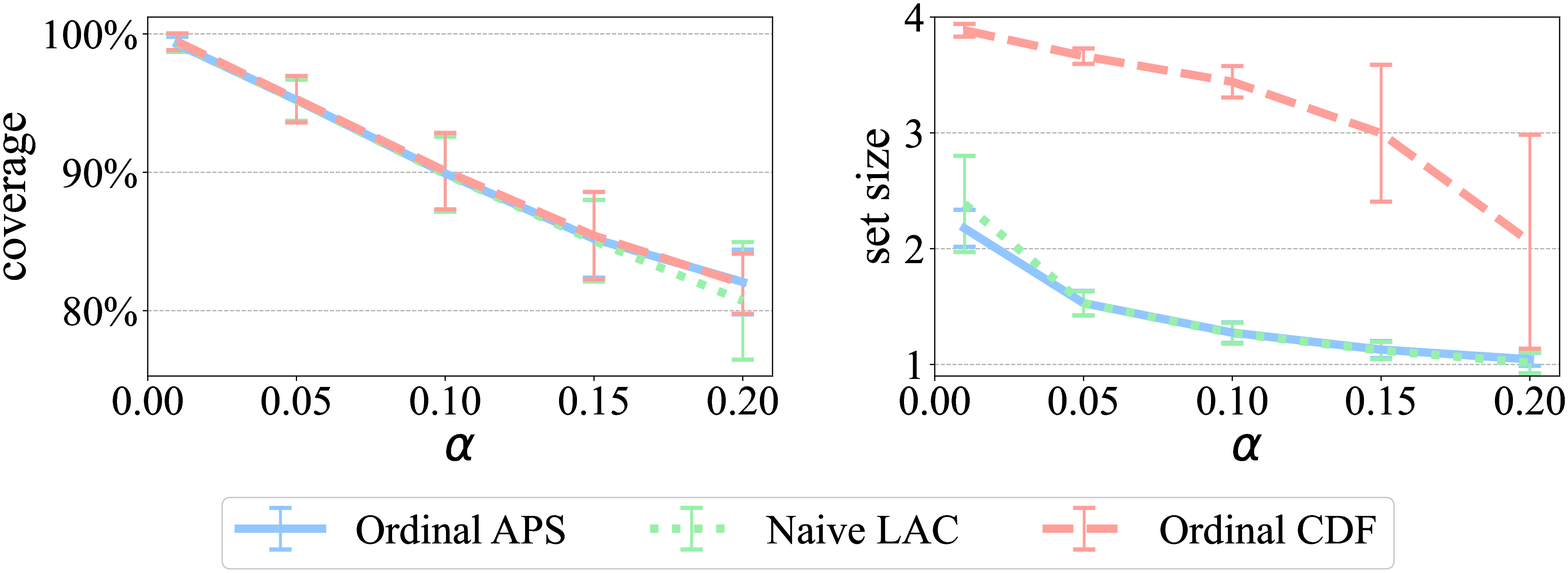}
            \caption{Empirical coverage and set size for three conformal prediction methods for $\alpha \in \{0.2, 0.15, 0.1, 0.05, 0.01\}$ (averaged over 100 trials and shown with $\pm$ 1 standard deviation).} \label{fig:aggregate-compare}
        \end{figure}
        
        As expected, all three conformal methods empirically achieve the desired marginal coverage as guaranteed by Theorem~\ref{thm:conformal-coverage}.
        However, Ordinal CDF requires a much larger set size to attain proper coverage than either Naive LAC or Ordinal APS for all values of $\alpha$ (shown in Figure~\ref{fig:aggregate-compare}).
        
        In addition, while aggregate coverage is satisfied for each method, we find significant differences in class-conditional coverage (i.e. prediction sets stratified by the true stenosis severity label), which is shown in Figure~\ref{fig:class-conditional-compare}.
        We see that prediction sets for ``mild stenosis'' and ``moderate stenosis'' grades have lower average coverage and larger set sizes than prediction sets for ``no stenosis'' and ``severe stenosis'' grades.
        These differences may be partly attributed to the fact that the ``no stenosis'' class constitutes the majority of the label distribution (67\%) and so may be easier for a model trained on this distribution to classify.
        Additionally, ``mild stenosis'' and ``moderate stenosis'' grades may be more challenging to differentiate than ``no stenosis'' and ``severe stenosis'' grades, reflecting the greater inherent variability and uncertainty in the ground truth ratings.

        \begin{figure}
            \centering
            \includegraphics[width=\textwidth]{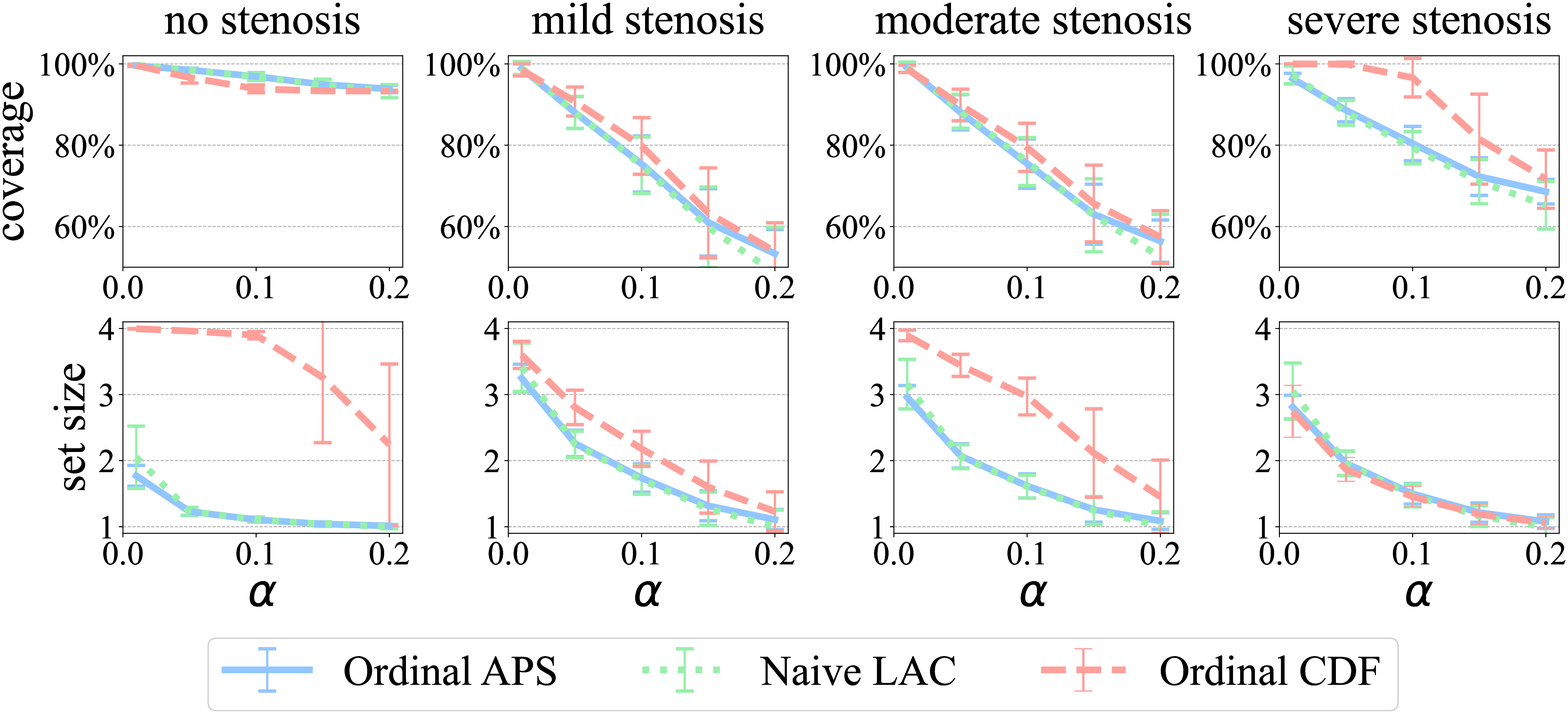}
            \caption{Comparing coverage and set size when stratified by ground-truth stenosis severity grading.} \label{fig:class-conditional-compare}
        \end{figure}
        
        \begin{figure}
            \centering
            \includegraphics[width=\textwidth]{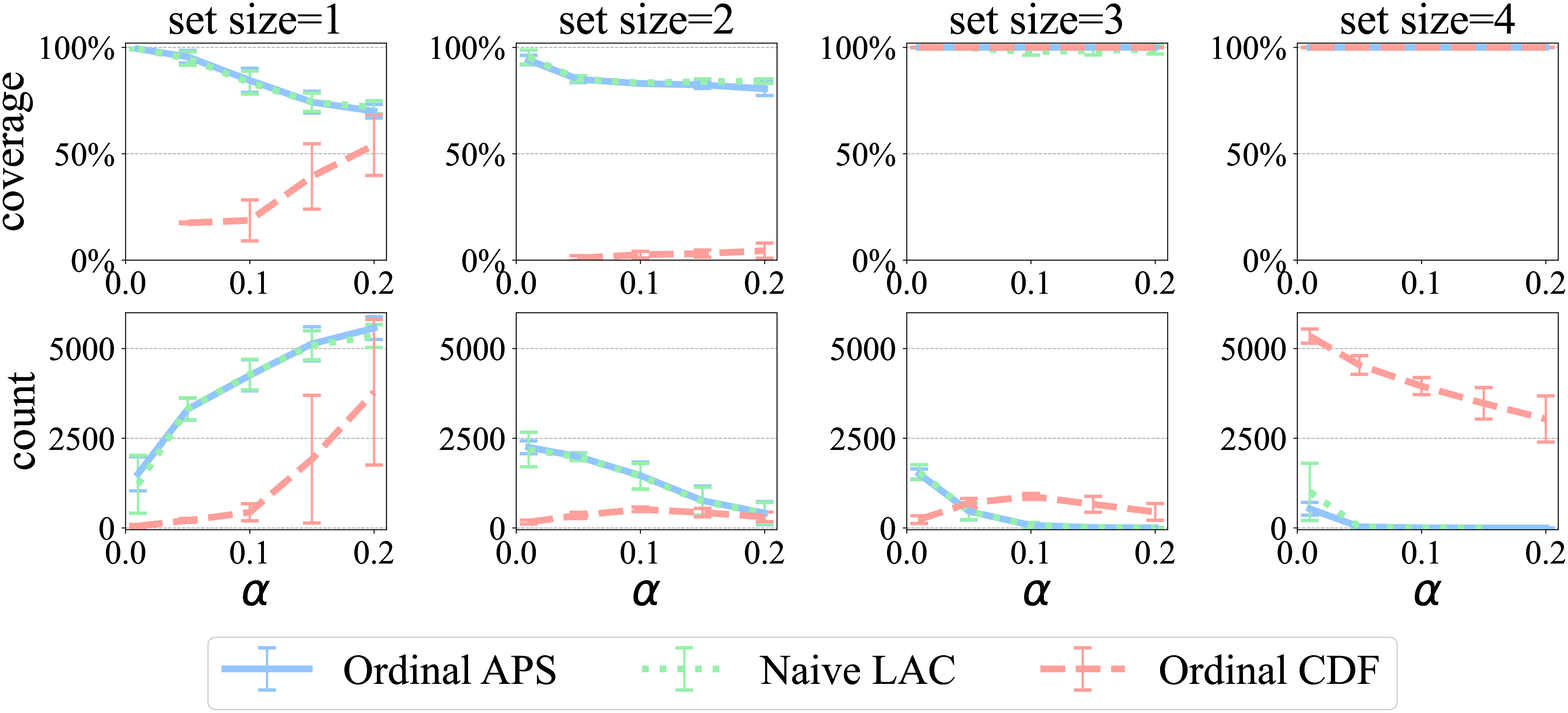}
            \caption{Comparing coverage and count of number of sets with a particular size when stratified by prediction set size (coverage is only shown if there was at least one prediction set at the desired $\alpha$ for a particular method).} \label{fig:size-conditional-compare}
        \end{figure}
        
        We also compare coverage and distribution stratified by set size in Figure~\ref{fig:size-conditional-compare}.
        Stratifying by set size reveals that most predictions with Ordinal APS and Naive LAC contain only one or two grading predictions while Ordinal CDF mainly predicts a set of all four possible gradings (which always trivially satisfies coverage).

        Lastly, we compare coverage and set size at the disc level in Table~\ref{tab:disc-level-performance} at $\alpha = 0.1$.
        We find that coverage was inversely correlated to the prevalence of severe stenosis, which is most often found in the lower lumbar disc levels.
        
         \begin{table}[ht]
         \centering
         \caption{Coverage and set size stratified by intervertebral disc level at $\alpha=0.1$.}
         \label{tab:disc-level-performance}
         \small
         \begin{tabular}{c|cccc}
         \toprule
         \bf{disc level} & \bf{average severity} & \bf{method} & \bf{coverage} & \bf{set size} \\
         \midrule
         \multirow{3}{*}{\bf{T12-L1}} & \multirow{3}{*}{0.04}
             & Ordinal CDF & $99.7\% \pm 0.3\%$ & $ 3.97 \pm 0.02$ \\
             & & Naive LAC   & $98.6\% \pm 0.4\%$ & $ 1.04 \pm 0.02$ \\
             & & Ordinal APS & $98.6\% \pm 0.3\%$ & $ 1.04 \pm 0.01$ \\
         \midrule
         \multirow{3}{*}{\bf{L1-L2}} & \multirow{3}{*}{0.18}
             & Ordinal CDF & $97.5\% \pm 1.1\%$ & $ 3.84 \pm 0.06$ \\
             & & Naive LAC   & $95.4\% \pm 1.0\%$ & $ 1.12 \pm 0.04$ \\
             & & Ordinal APS & $95.4\% \pm 1.0\%$ & $ 1.12 \pm 0.04$ \\
         \midrule
         \multirow{3}{*}{\bf{L2-L3}} & \multirow{3}{*}{0.48}
             & Ordinal CDF & $90.4\% \pm 2.1\%$ & $ 3.50 \pm 0.11$ \\
             & & Naive LAC   & $90.0\% \pm 3.0\%$ & $ 1.25 \pm 0.09$ \\
             & & Ordinal APS & $90.1\% \pm 3.0\%$ & $ 1.26 \pm 0.09$ \\
         \midrule
         \multirow{3}{*}{\bf{L3-L4}} & \multirow{3}{*}{0.75}
             & Ordinal CDF & $84.3\% \pm 4.0\%$ & $ 3.17 \pm 0.19$ \\
             & & Naive LAC   & $86.0\% \pm 4.0\%$ & $ 1.42 \pm 0.12$ \\
             & & Ordinal APS & $85.7\% \pm 4.2\%$ & $ 1.42 \pm 0.13$ \\
         \midrule
         \multirow{3}{*}{\bf{L4-L5}} & \multirow{3}{*}{1.06}
             & Ordinal CDF & $81.7\% \pm 4.2\%$ & $ 2.86 \pm 0.21$ \\
             & & Naive LAC   & $83.2\% \pm 4.6\%$ & $ 1.48 \pm 0.15$ \\
             & & Ordinal APS & $83.2\% \pm 4.8\%$ & $ 1.48 \pm 0.15$ \\
         \midrule
         \multirow{3}{*}{\bf{L5-S1}} & \multirow{3}{*}{0.71}
             & Ordinal CDF & $84.4\% \pm 4.1$ & $ 3.20 \pm 0.17$ \\
             & & Naive LAC   & $85.3\% \pm 3.2$ & $ 1.39 \pm 0.12$ \\
             & & Ordinal APS & $85.7\% \pm 3.0$ & $ 1.41 \pm 0.11$ \\
         \midrule
         \midrule
         \multirow{3}{*}{\bf{Total}} & \multirow{3}{*}{0.54}
             & Ordinal CDF & $90.0\% \pm 2.5\%$ & $ 3.44 \pm 0.12$ \\
             & & Naive LAC   & $90.0\% \pm 2.6\%$ & $ 1.28 \pm 0.09$ \\
             & & Ordinal APS & $90.1\% \pm 2.7\%$ & $ 1.28 \pm 0.09$ \\
         \bottomrule
         \end{tabular}
        \end{table}
        
        Overall, we conclude that Ordinal APS performs similarly to LAC in both coverage and set size, and both Ordinal APS and Naive LAC outperform Ordinal CDF.
        The similarities between LAC and Ordinal APS are notable --- they almost always result in the same sets, although the algorithms are quite different.
        This is unsurprising given that in our setting $|\Y|=4$ and the model's accuracy is high, so bimodal softmax scores almost never happen.
        Therefore LAC and Ordinal APS do the same thing.
        This observation does not generalize; other ordinal prediction problems with more categories will have bimodal distributions and thus LAC and Ordinal APS will differ.
    
    \subsection{Clinical Review of High Uncertainty Predictions}

        To investigate the clinical utility of Ordinal APS to enhance AI-augmented workflows, we evaluate one possible clinical integration use case: flagging low confidence predictions (i.e. ones with a large set size). The radiologist's performance and user experience of the model may be improved by raising their awareness of those instances in which the model performance may be degraded by scan anomalies or when uncertainty quantification is very high, excluding such potentially poor quality results from their review responsibilities altogether. 
        We define an uncertainty score for each patient by taking the average set size for all disc levels and grading tasks.
        An neuroradiologist with $>20$ years of experience determined what constituted a ``significant imaging anomaly'' within the context of spinal stenosis interpretation.
            
         \begin{figure}[t]
            \centering
            \includegraphics[width=\textwidth]{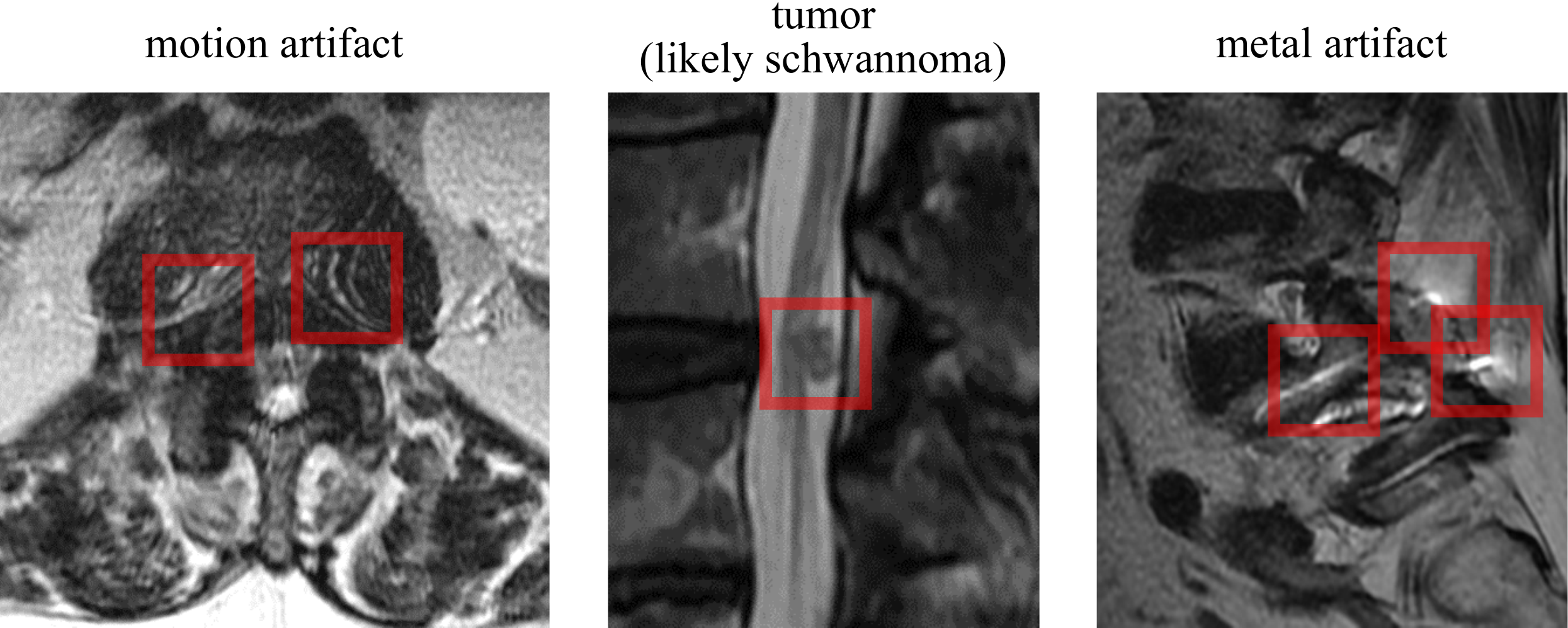}
            \caption{Three anomalies found in high uncertainty predictions; the anomalous areas are boxed in red.} \label{fig:anomaly}
        \end{figure} 
        
        As a statistical validation of these results, we examined the report of 70 cases with the highest uncertainty and found 17 such anomalies: 11 cases with artifacts from metallic orthopedic hardware, four cases with motion artifacts, one case with a large tumor occupying the spinal canal, and one case with a severe congenital abnormality (achondroplastic dwarfism). In contrast, a random sample of 70 cases from the dataset only demonstrated five cases with significant anomalies which were all orthopedic hardware artifacts.
        This difference is significant with $p<0.05$ by Fisher's exact test, and qualitatively the abnormalities found in the filtered samples were more extreme.

        Our manual review of high uncertainty cases shows promise for improving the clinician experience with AI-assisted medical software tools using distribution-free uncertainty quantification.  Rather than presenting all AI predictions as equally trustworthy, cases with higher uncertainty can be flagged to prompt additional scrutiny or hidden from the user altogether to maximize efficiency. While prospective evaluation of this use of conformal prediction in more clinically realistic settings will be needed to validate general feasibility and utility, our preliminary experiments are a step towards demonstrating the clinical applicability of conformal prediction.
        
\section{Related Work}
    \subsubsection*{Conformal Prediction and Distribution-Free Uncertainty Quantification}
    Conformal prediction is a flexible technique for generating prediction intervals from arbitrary models.
    It was first developed by Vladimir Vovk and collaborators in the late 1990s~\cite{vovk1999machine,vovk2005algorithmic,lei2013conformal,lei2013distribution,lei2014classification,Sadinle2016LeastAS}.
    We build most directly on the work of Yaniv Romano and collaborators, who developed the Adaptive Prediction Sets method studied in~\cite{romano2020classification,angelopoulos2020sets} and the histogram binning method in~\cite{sesia2021conformal}.
    The latter work is the most relevant, and it proposes an algorithm very similar to Algorithm~\ref{alg:approx-ordinal-aps} in a continuous setting with histogram regression.
    Our work also relies on the nested set outlook on conformal prediction~\cite{gupta2021nested}.
    We also build directly on existing work involving distribution-free risk-controlling prediction sets and Learn then Test~\cite{bates2021distribution,angelopoulos2021learn}.
    The LAC baseline is taken from~\cite{Sadinle2016LeastAS}, and the ordinal CDF baseline is similar to the softmax method in~\cite{angelopoulos2022image}, which is in turn motivated by~\cite{koenker1978regression,romano2019conformalized}.
    A gentle introduction to these topics and their history is available in~\cite{angelopoulos2021gentle}, or alternatively, in~\cite{shafer2008tutorial}.
    
    \subsubsection*{Uncertainty Quantification for Critical Applications} 
    Recently, uncertainty quantification has been promoted to facilitate trustworthiness and transparency in black-box algorithms, such as deep learning, for critical decision-making~\cite{10.1145/3461702.3462571}.
    In particular, conformal prediction methods have been applied to a wide range of safety-critical applications -- from reducing false alarms in the detection of sepsis risk \cite{sepsis} to end-to-end autonomous driving control \cite{DBLP:journals/corr/abs-1909-09884}.
    Distribution-free uncertainty quantification techniques such as conformal prediction sets have emerged as an essential tool for  rigorous statistical guarantees in medical decision-making~\cite{uncertaintyQuantificationMedical,lu2022fair,angelopoulos2021private,10.1007/s41666-021-00113-8,fannjiang2022conformal,https://doi.org/10.48550/arxiv.2206.12008}.
    
\section{Conclusion}
    We show how conformal prediction sets can complement existing AI systems without further modification to the model to provide distribution-free reliability guarantees.
    We demonstrate its clinically utility in the application of flagging high uncertainty cases in automated stenosis severity grading for followup review.
    We hope this work promotes further studies on the trustworthiness and usability of uncertainty-aware machine learning systems for clinical applications.
    
\section*{Acknowledgements}
SP receives research support from GE Healthcare. 
AA is funded by the NSFGRFP and a Berkeley Fellowship.
We thank Stephen Bates for many helpful conversations.
    
\bibliographystyle{splncs04} 
\bibliography{bibliography} 

\begin{thebibliography}{10}
\providecommand{\url}[1]{\texttt{#1}}
\providecommand{\urlprefix}{URL }
\providecommand{\doi}[1]{https://doi.org/#1}

\bibitem{allen20212020}
Allen, B., Agarwal, S., Coombs, L., Wald, C., Dreyer, K.: 2020 acr data science
  institute artificial intelligence survey. Journal of the American College of
  Radiology  \textbf{18}(8),  1153--1159 (2021)

\bibitem{angelopoulos2021gentle}
Angelopoulos, A.N., Bates, S.: A gentle introduction to conformal prediction
  and distribution-free uncertainty quantification. arXiv preprint
  arXiv:2107.07511  (2021)

\bibitem{angelopoulos2021learn}
Angelopoulos, A.N., Bates, S., Cand{\`e}s, E.J., Jordan, M.I., Lei, L.: Learn
  then test: Calibrating predictive algorithms to achieve risk control. arXiv
  preprint arXiv:2110.01052  (2021)

\bibitem{angelopoulos2021private}
Angelopoulos, A.N., Bates, S., Zrnic, T., Jordan, M.I.: Private prediction
  sets. arXiv preprint arXiv:2102.06202  (2021)

\bibitem{angelopoulos2022image}
Angelopoulos, A.N., Kohli, A.P., Bates, S., Jordan, M.I., Malik, J., Alshaabi,
  T., Upadhyayula, S., Romano, Y.: Image-to-image regression with
  distribution-free uncertainty quantification and applications in imaging.
  arXiv preprint arXiv:2202.05265  (2022)

\bibitem{angelopoulos2020sets}
Angelopoulos, A.N., Bates, S., Malik, J., Jordan, M.I.: Uncertainty sets for
  image classifiers using conformal prediction. In: International Conference on
  Learning Representations (ICLR) (2021)

\bibitem{barber2019limits}
Barber, R., Candès, E., Ramdas, A., Tibshirani, R.: The limits of
  distribution-free conditional predictive inference. Information and Inference
   \textbf{10}(2),  455--482 (08 2021)

\bibitem{bates2021distribution}
Bates, S., Angelopoulos, A., Lei, L., Malik, J., Jordan, M.: Distribution-free,
  risk-controlling prediction sets. Journal of the Association for Computing
  Machinery  \textbf{68}(6) (9 2021)

\bibitem{uncertaintyQuantificationMedical}
Begoli, E., Bhattacharya, T., Kusnezov, D.: The need for uncertainty
  quantification in machine-assisted medical decision making. Nature  (01 2019)

\bibitem{10.1145/3461702.3462571}
Bhatt, U., Antor\'{a}n, J., Zhang, Y., Liao, Q.V., Sattigeri, P., Fogliato, R.,
  Melan\c{c}on, G., Krishnan, R., Stanley, J., Tickoo, O., Nachman, L.,
  Chunara, R., Srikumar, M., Weller, A., Xiang, A.: Uncertainty as a Form of
  Transparency: Measuring, Communicating, and Using Uncertainty, p. 401–413.
  Association for Computing Machinery, New York, NY, USA (2021)

\bibitem{fannjiang2022conformal}
Fannjiang, C., Bates, S., Angelopoulos, A., Listgarten, J., Jordan, M.I.:
  Conformal prediction for the design problem. arXiv preprint arXiv:2202.03613
  (2022)

\bibitem{stenosisvariability}
Fu, M., Buerba, R., III, W., Blizzard, D., Lischuk, A., Haims, A., Grauer, J.:
  Inter-rater and intra-rater agreement of magnetic resonance imaging findings
  in the lumbar spine: Significant variability across degenerative conditions.
  The Spine Journal  \textbf{14} (10 2014). \doi{10.1016/j.spinee.2014.03.010}

\bibitem{gupta2021nested}
Gupta, C., Kuchibhotla, A.K., Ramdas, A.: Nested conformal prediction and
  quantile out-of-bag ensemble methods. Pattern Recognition p. 108496 (2021)

\bibitem{hendrycks2021nae}
Hendrycks, D., Zhao, K., Basart, S., Steinhardt, J., Song, D.: Natural
  adversarial examples. CVPR  (2021)

\bibitem{koenker1978regression}
Koenker, R., Bassett~Jr, G.: Regression quantiles. Econometrica: {J}ournal of
  the {E}conometric {S}ociety  \textbf{46}(1),  33--50 (1978)

\bibitem{lei2014classification}
Lei, J.: Classification with confidence. Biometrika  \textbf{101}(4),  755--769
  (10 2014)

\bibitem{lei2013conformal}
Lei, J., Rinaldo, A., Wasserman, L.: A conformal prediction approach to explore
  functional data. Annals of Mathematics and Artificial Intelligence
  \textbf{74},  29--43 (2015)

\bibitem{lei2013distribution}
Lei, J., Robins, J., Wasserman, L.: Distribution-free prediction sets. {Journal
  of the American Statistical Association}  \textbf{108}(501),  278--287 (2013)

\bibitem{https://doi.org/10.48550/arxiv.2206.12008}
Lu, C., Chang, K., Singh, P., Kalpathy-Cramer, J.: Three applications of
  conformal prediction for rating breast density in mammography (2022).
  \doi{10.48550/ARXIV.2206.12008}, \url{https://arxiv.org/abs/2206.12008}

\bibitem{lu2022fair}
Lu, C., Lemay, A., Chang, K., Hoebel, K., Kalpathy-Cramer, J.: Fair conformal
  predictors for applications in medical imaging (2022)

\bibitem{DBLP:journals/corr/abs-1807-10215}
Lu, J., Pedemonte, S., Bizzo, B., Doyle, S., Andriole, K.P., Michalski, M.H.,
  Gonzalez, R.G., Pomerantz, S.R.: Deepspine: Automated lumbar vertebral
  segmentation, disc-level designation, and spinal stenosis grading using deep
  learning. CoRR  \textbf{abs/1807.10215} (2018)

\bibitem{DBLP:journals/corr/abs-1909-09884}
Michelmore, R., Wicker, M., Laurenti, L., Cardelli, L., Gal, Y., Kwiatkowska,
  M.: Uncertainty quantification with statistical guarantees in end-to-end
  autonomous driving control. CoRR  \textbf{abs/1909.09884} (2019)

\bibitem{Minderer2021RevisitingTC}
Minderer, M., Djolonga, J., Romijnders, R., Hubis, F.A., Zhai, X., Houlsby, N.,
  Tran, D., Lucic, M.: Revisiting the calibration of modern neural networks.
  arXiv preprint arXiv:2106.07998  (2021)

\bibitem{romano2019conformalized}
Romano, Y., Patterson, E., Cand{\`e}s, E.: Conformalized quantile regression.
  In: Advances in Neural Information Processing Systems 32: Annual Conference
  on Neural Information Processing Systems 2019, vol.~32, pp. 3543--3553. NIPS
  (2019)

\bibitem{romano2020classification}
Romano, Y., Sesia, M., Cand{\`e}s, E.: Classification with valid and adaptive
  coverage. In: Larochelle, H., Ranzato, M., Hadsell, R., Balcan, M.F., Lin, H.
  (eds.) Advances in Neural Information Processing Systems. vol.~33, pp.
  3581--3591. Curran Associates, Inc. (2020)

\bibitem{Sadinle2016LeastAS}
Sadinle, M., Lei, J., Wasserman, L.: Least ambiguous set-valued classifiers
  with bounded error levels. Journal of the American Statistical Association
  \textbf{114},  223 -- 234 (2019)

\bibitem{sesia2021conformal}
Sesia, M., Romano, Y.: Conformal prediction using conditional histograms.
  Advances in Neural Information Processing Systems  \textbf{34} (2021)

\bibitem{shafer2008tutorial}
Shafer, G., Vovk, V.: A tutorial on conformal prediction. Journal of Machine
  Learning Research  \textbf{9}(Mar),  371--421 (2008)

\bibitem{sepsis}
Shashikumar, S., Wardi, G., Malhotra, A., Nemati, S.: Artificial intelligence
  sepsis prediction algorithm learns to say “{I} don’t know”. npj Digital
  Medicine  \textbf{4} (12 2021)

\bibitem{10.1007/s41666-021-00113-8}
Vazquez, J., Facelli, J.: Conformal prediction in clinical medical sciences.
  Journal of Healthcare Informatics Research  (01 2022)

\bibitem{vovk2005algorithmic}
Vovk, V., Gammerman, A., Shafer, G.: {Algorithmic Learning in a Random World}.
  Springer, New York, NY, USA (2005)

\bibitem{vovk1999machine}
Vovk, V., Gammerman, A., Saunders, C.: Machine-learning applications of
  algorithmic randomness. In: {International Conference on Machine Learning}.
  pp. 444--453 (1999)

\end{thebibliography}
\clearpage
\appendix
\section{Formal Theorem Statements and Proofs}
\label{app:proofs}
  
  \begin{manualtheorem}{1}[Formal conformal coverage guarantee]
    \label{thm:conformal-coverage-formal}
    \small
    Let $(X_1,Y_1)$, $(X_2,Y_2)$, ..., $(X_n,Y_n)$ and $(\Xtest,\Ytest)$ be drawn independently and identically distributed from distribution $\P$, and let $\mathcal{C}_\lambda$ be any sequence of sets nested in $\lambda$ such that $\underset{\lambda \to \infty}{\lim} \mathcal{C}_\lambda = \Y$. 
    Finally, let $\lhat=\A(\mathcal{C}_\lambda, \; \alpha)$. 
    Then $\mathcal{C}_{\lhat}$ satisfies~\eqref{eq:marginal-coverage}.
  \end{manualtheorem}
  \begin{proof}[Theorem 1]
    This is a standard result in the conformal literature~\cite{vovk1999machine}.
    A stripped down proof appears in~\cite{angelopoulos2021gentle}.
    The nested set version appears in~\cite{gupta2021nested}.
  \end{proof}
  
  The informal version of Theorem 1 in the main text is simply a corollary of the above when applied to the specific sequence of nested sets $\Tlam$.
  
  \begin{proof}[Corollary 1]
    Pick $\lambda^{(1)} \leq \lambda^{(2)}$.
    It is clear that $\hat{f}(j|x) \geq \lambda^{(2)}$ implies $\hat{f}(j|x) \geq \lambda^{(1)}$.
    Therefore $\T_{\lambda^{(2)}}(x) \subseteq \T_{\lambda^{(1)}}(x)$.
    Applying Theorem~\ref{thm:conformal-coverage} completes the proof.
  \end{proof}
  
  \begin{cor}
    \label{cor:approx}
    \small
    In the setting of Theorem 1, let $\tilde{\mathcal{T}}_{\lambda}$ be the sequence of nested sets computed using Algorithm~\ref{alg:approx-ordinal-aps}, and let $\lhat=\A\Big(\tilde{\mathcal{T}}_{\lambda},\;\alpha\Big)$.
    Then $\tilde{\mathcal{T}}_{\lhat}$ satisfies~\ref{eq:marginal-coverage}.
  \end{cor}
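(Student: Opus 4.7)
The plan is to reduce to Theorem~\ref{thm:conformal-coverage-formal} by verifying that the greedy sequence $\tilde{\mathcal{T}}_\lambda$ produced by Algorithm~\ref{alg:approx-ordinal-aps} satisfies the two hypotheses of that theorem: (i) nesting in $\lambda$, and (ii) $\lim_{\lambda \to \infty} \tilde{\mathcal{T}}_\lambda = \Y$. Once both are in place, marginal coverage for $\tilde{\mathcal{T}}_{\lhat}$ follows immediately by applying Theorem~\ref{thm:conformal-coverage-formal} with $\mathcal{C}_\lambda = \tilde{\mathcal{T}}_\lambda$, since the calibration algorithm $\A$ used to produce $\lhat$ depends on the nested family only through the membership indicators $\ind{Y_i \in \tilde{\mathcal{T}}_\lambda(X_i)}$.

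For nesting, I would observe that for any fixed input $x$ the algorithm implicitly constructs a deterministic, $\lambda$-independent chain of sets $S_0(x) \subseteq S_1(x) \subseteq \cdots$, where $S_0(x) = \arg\max \hat{f}(x)$ and $S_{k+1}(x)$ is obtained from $S_k(x)$ by adjoining the one neighbor (either just below the current minimum or just above the current maximum) with larger softmax mass. The role of $\lambda$ is only to select the stopping index, namely the first $k$ at which the accumulated mass $q_k$ exceeds $\lambda$. Because $q_k$ is nondecreasing in $k$, this stopping index is nondecreasing in $\lambda$, and therefore $\lambda_1 \leq \lambda_2$ implies $\tilde{\mathcal{T}}_{\lambda_1}(x) \subseteq \tilde{\mathcal{T}}_{\lambda_2}(x)$, which is exactly the nested-set property required.

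For the limit condition, I would use finiteness of $\Y$: after at most $K-1$ augmentations the chain exhausts $\Y$, at which point $q_k = \sum_{y \in \Y} \hat{f}(x)_y = 1$. Hence for every $\lambda \geq 1$ the algorithm returns $\tilde{\mathcal{T}}_\lambda(x) = \Y$, and so $\lim_{\lambda \to \infty} \tilde{\mathcal{T}}_\lambda(x) = \Y$ pointwise in $x$. The only real subtlety, and the main piece of bookkeeping in the argument, is to handle the while-loop termination gracefully: once both candidate neighbors $\min \tilde{\mathcal{T}}_\lambda(x) - 1$ and $\max \tilde{\mathcal{T}}_\lambda(x) + 1$ fall outside $\{0,\ldots,K-1\}$, the indicator in line~5 forces $y$ to carry zero mass and the loop must be declared to exit; this happens precisely when the full label set has been recovered, so both the monotonicity and the eventual equality to $\Y$ are preserved. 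With these two properties checked, the hypotheses of Theorem~\ref{thm:conformal-coverage-formal} are met and the corollary follows, with no probabilistic argument beyond that invocation.
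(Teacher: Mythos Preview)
Your approach is essentially the paper's: verify that $\tilde{\mathcal{T}}_\lambda$ is nested in $\lambda$ (the paper argues this in one line via the while-loop condition $q \leq \lambda^{(1)} \Rightarrow q \leq \lambda^{(2)}$, you via the equivalent stopping-index monotonicity on the deterministic chain), then invoke Theorem~\ref{thm:conformal-coverage-formal}. You are in fact more careful than the paper, which silently skips the limit condition; one minor bookkeeping slip is that $q$ in Algorithm~\ref{alg:approx-ordinal-aps} accumulates only the added neighbors' masses and never the initial $\hat f(x)_{\arg\max}$, so it tops out at $1-\hat f(x)_{\arg\max}$ rather than $1$, but your separate treatment of loop termination once $\Y$ is exhausted already absorbs this and the conclusion stands.
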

  \begin{proof}[Corollary~\ref{cor:approx}]
    Pick $\lambda^{(1)} \leq \lambda^{(2)}$.
    Examining Algorithm~\ref{alg:approx-ordinal-aps}, $q\leq \lambda^{(1)}$ implies $q\leq \lambda^{(2)}$ .
    Therefore $\tilde{\T}_{\lambda^{(1)}}(x) \subseteq \tilde{\T}_{\lambda^{(2)}}(x)$.
    Applying Theorem~\ref{thm:conformal-coverage} completes the proof.
  \end{proof}

\end{document}